\newcommand{\N}{\mathcal{N}}
\newtheorem{theorem}{Theorem}
\newtheorem{remark}{Remark}
\newtheorem{corollary}{Corollary}
\newtheorem{definition}{Definition}
\newtheorem{conjecture}{Conjecture}
\newcommand\fs@betterruled{%
  \def\@fs@cfont{\bfseries}\let\@fs@capt\floatc@ruled
  \def\@fs@pre{\vspace*{5pt}\hrule height.8pt depth0pt \kern2pt}%
  \def\@fs@post{\kern2pt\hrule\relax}%
  \def\@fs@mid{\kern2pt\hrule\kern2pt}%
  \let\@fs@iftopcapt\iftrue}
\newcommand\copyrighttext{%
  \footnotesize \textcopyright 2021 IEEE.  Personal use of this material is permitted.  Permission from IEEE must be obtained for all other uses, in any current or future media, including reprinting/republishing this material for advertising or promotional purposes, creating new collective works, for resale or redistribution to servers or lists, or reuse of any copyrighted component of this work in other works.
  
  Citation information: DOI 10.1109/LCSYS.2021.3082875, IEEE Control Systems Letters}
\newcommand\copyrightnotice{%
\begin{tikzpicture}[remember picture,overlay]
\node[anchor=south,yshift=-50pt] at (current page.north) {\fbox{\parbox{\dimexpr\textwidth-\fboxsep-\fboxrule\relax}{\copyrighttext}}};
\end{tikzpicture}%
}
\def\BibTeX{{\rm B\kern-.05em{\sc i\kern-.025em b}\kern-.08em
    T\kern-.1667em\lower.7ex\hbox{E}\kern-.125emX}}
\begin{document}

\title{A Stable High-order Tuner for General Convex Functions}
\author{Jos{\'e} M. Moreu and Anuradha M. Annaswamy
%\author{First A. Author, \IEEEmembership{Fellow, IEEE}, Second B. Author, and Third C. Author, Jr., \IEEEmembership{Member, IEEE}
\thanks{This work was supported in part by the US-Spain Fulbright Commission $\&$ Cosentino S.A. and by the Boeing Strategic University Initiative.}
\thanks{J.M. Moreu is with the Department of Mechanical Engineering, Massachusetts Institute of Technology,
        Cambridge, MA 02139, USA
        {\tt\small jmmoreu@mit.edu}}
\thanks{A.M. Annaswamy is Senior Research Scientist in the Department of Mechanical Engineering, Massachusetts Institute of Technology, Cambridge, MA 02139, USA
        {\tt\small aanna@mit.edu}}
}

\IEEEoverridecommandlockouts
\IEEEpubid{\makebox[\columnwidth]{978-1-5386-5541-2/18/\$31.00~\copyright2021 IEEE \hfill} \hspace{\columnsep}\makebox[\columnwidth]{ }}
\maketitle
\thispagestyle{empty} % Removes the page number in the first page
\IEEEpubidadjcol

\begin{abstract}
Iterative gradient-based algorithms have been increasingly applied for the training of a broad variety of machine learning models including large neural-nets. In particular, momentum-based methods, with accelerated learning guarantees, have received a lot of attention due to their provable guarantees of fast learning in certain classes of problems and multiple algorithms have been derived. However, properties for these methods hold only for constant regressors. When time-varying regressors occur, which is commonplace in dynamic systems, many of these momentum-based methods cannot guarantee stability. Recently, a new High-order Tuner (HT) was developed for linear regression problems and shown to have 1) stability and asymptotic convergence for time-varying regressors and 2) non-asymptotic accelerated learning guarantees for constant regressors.  In this paper, we extend and discuss the results of this same HT for general convex loss functions. Through the exploitation of convexity and smoothness definitions, we establish similar stability and asymptotic convergence guarantees. Finally, we provide numerical simulations supporting the satisfactory behavior of the HT algorithm as well as an accelerated learning property.
\end{abstract}

\copyrightnotice
%%%%%%%%%%%%%%%%%%%%%%%%%%%%%%%%%%%%%%%%%%%%%%%%%%%%%%%%%%%%%%%%%%%%%%%%%%%%%%%%
\section{Introduction}

\IEEEPARstart{G}{radient-descent} constitutes the nerve center of solutions to several problems in a wide range of fields such as adaptive control, machine learning, and optimization  \cite{Narendra_1989, boyd2004convex, hazan2019introduction, Nesterov2018LecturesOptimization}. In adaptive control, reducing the control tracking error of an uncertain dynamical system as well as learning the  unknown parameters of the system are the underlying goals. A gradient-descent approach is often employed to realize both goals, first to obtain a fast convergence of the performance error and then, to reduce the learning error. In machine learning, fast and correct training of models such as neural networks is sought after, which necessitates the reduction of an underlying loss function using a gradient-based approach.  Optimization approaches require the solution of an augmented Lagrangian in an expedient manner, through a gradient-descent method. Given the importance of the fast convergence in all these problems, there is a need for algorithms that can lead to an order of magnitude improvement in the speed of convergence, both performance and learning errors, while retaining stability. This paper proposes such an algorithm.

Recently, a class of High-order Tuners (HT) was proposed in continuous-time \cite{Gaudio2020ASystems} for a large class of dynamic systems for the purpose of estimation of unknown parameters. The estimation problem for this class of systems can be reformulated as a linear regression problem, where the underlying regressors correspond to various system variables that can be measured, including inputs, outputs, and states. Each of these high-order tuners was shown to result in a stable performance error when the regressors were time-varying. A variational perspective was proposed in \cite{Gaudio2020ASystems} with a Hamiltonian interpretation as the unifying framework for this class of high-order tuners. One of these tuners was extended in \cite{boffi2020implicit} for a class of nonlinear problems where the underlying error model is still based on linear regression. These high-order tuners are inspired by earlier work in \cite{morse1992high} and \cite{evesque2003adaptive}.

The main motivation of these high-order tuners was to speed up the performance that could be obtained from gradient methods. In \cite{Gaudio2020AcceleratedRegressors}, a discrete-time HT was proposed and was shown to have two important properties. First, the HT was shown to have  accelerated convergence of performance error when the regressors are constant, with the rate of convergence a log factor away from the well known Nesterov’s algorithm \cite{Nesterov2018LecturesOptimization}.  Second, it was shown to be stable, whether or not the underlying regressors are constant or time-varying.  In contrast, Nesterov’s algorithm becomes unstable for the time-varying regressor case. All of these discussions were limited to linear error models which in turn have a quadratic and homogeneous loss function. 

In this paper, we extend the results of our HT in \cite{Gaudio2020AcceleratedRegressors} for discrete-time systems with convex loss functions and therefore applicable to a large class of error models. We show both for the case when the loss function is smooth and convex, and for the case when it is smooth and strongly convex that the HT can be guaranteed to be stable. In the first case, we conclude boundedness of the parameter estimate and that the loss function reaches its minimum (Theorem \ref{theorem:HOT_smooth}), while in the second case, we establish exponential convergence of the parameter estimate to its true value and exponential convergence of the loss function towards its minimum (Theorem \ref{theorem:HOT_strongly}). As a precursor to both these cases, we consider a HT for a continuous-time systems with convex functions, and establish a similar stability result to Theorem \ref{theorem:HOT_smooth} (Theorem \ref{theorem:HT_continuous}).  For ease of exposition, we repeat the results of \cite{Gaudio2020AcceleratedRegressors}. We show through simulations that our HT leads to an accelerated convergence of the performance error for a general convex function.

The organization of the paper is as follows. In Section \ref{section:preliminaries}, we present a few preliminaries related to convex functions as well as the problem statement. In Section \ref{section:linear}, we focus on the minimization of smooth convex functions. We first present existing results related to HT for linear error models presented in \cite{Gaudio2020AcceleratedRegressors}. We then present our first main contribution of the paper, where we extend the stability properties of these HT for general convex loss functions, first in continuous time, and then in discrete-time. In Section \ref{section:SmoothCase} we present the stability properties of HT for smooth and strongly convex functions. In Section \ref{section:sims}, several numerical examples are discussed which illustrate the convergence properties of the proposed high-order tuner.

\section{Preliminaries}\label{section:preliminaries}

\subsection{Definitions}
The following definitions and properties will be used throughout this paper, modified from \cite{Nesterov2018LecturesOptimization, Bubeck2015ConvexComplexity}.
\begin{definition}\label{def:convexity} 
A continuously differentiable function $f$ is convex if
\begin{equation}\label{eq:convexity}
    f(y)\geq f(x)+\nabla f(x)^T(y-x), \quad \forall x,y\in\mathbb{R}^N.
\end{equation}
\end{definition}
\begin{definition}\label{def:strgly-convexity} 
A continuously differentiable function $f$ is $\mu$-strongly convex if there exists a $\mu>0$ such that
\begin{equation}\label{eq:strgly-convexity}
    f(y)\geq f(x)+\nabla f(x)^T(y-x)+\frac{\mu}{2}\lVert y-x\rVert^2, \quad \forall x,y\in\mathbb{R}^N.
\end{equation}
\end{definition}
\begin{definition}\label{def:smoothness} 
A continuously differentiable function $f$ is $\bar{L}$-smooth if there exists a $\bar{L}>0$ such that
\begin{equation}\label{eq:smoothness}
    f(y)\leq f(x)+\nabla f(x)^T(y-x)+\frac{\bar{L}}{2}\lVert y-x\rVert^2, \quad \forall x,y\in\mathbb{R}^N.
\end{equation}
\end{definition}
\begin{corollary}\label{def:a1def} A continuously differentiable function $f$ is convex and $\bar{L}$-smooth if there exists a $\bar{L}>0$ such that $\forall x,y\in\mathbb{R}^N$,
\begin{equation*}\label{eq:smooth_convex}
    \nabla f(x)^T(y-x)\leq  f(y)-f(x)\leq \nabla f(x)^T(y-x)+\frac{\bar{L}}{2}\lVert y-x\rVert^2.
\end{equation*}
\end{corollary}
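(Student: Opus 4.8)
The plan is to recognize that this corollary is nothing more than the conjunction of Definition \ref{def:convexity} and Definition \ref{def:smoothness}, rewritten as a single two-sided chain, so the argument reduces to rearranging those two defining inequalities at a common pair of points. I would therefore fix arbitrary $x,y\in\mathbb{R}^N$ and treat the two halves of the claimed inequality separately, in each case applying an existing definition at exactly the base point $x$ and evaluation point $y$.

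First I would establish the left half. Starting from the convexity condition \eqref{eq:convexity}, namely $f(y)\geq f(x)+\nabla f(x)^T(y-x)$, I would subtract $f(x)$ from both sides to obtain $\nabla f(x)^T(y-x)\leq f(y)-f(x)$, which is precisely the lower bound appearing in the corollary.

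Next I would establish the right half. Starting from the smoothness condition \eqref{eq:smoothness}, namely $f(y)\leq f(x)+\nabla f(x)^T(y-x)+\frac{\bar{L}}{2}\lVert y-x\rVert^2$, I would again subtract $f(x)$ to obtain $f(y)-f(x)\leq \nabla f(x)^T(y-x)+\frac{\bar{L}}{2}\lVert y-x\rVert^2$. Concatenating this with the previous bound yields the full chain, and since $x,y$ were arbitrary the statement holds for all $x,y\in\mathbb{R}^N$. The converse implication, if one wishes to read the corollary as an equivalence, is immediate: the left inequality is verbatim Definition \ref{def:convexity} and the right inequality is verbatim Definition \ref{def:smoothness}.

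Because both steps are pure algebraic rearrangement, there is effectively no obstacle to overcome; the only point requiring care is that the same base point $x$ (hence the same gradient $\nabla f(x)$ and the same linear term $\nabla f(x)^T(y-x)$) appears in both bounds, so the two definitions must be invoked at the identical pair $(x,y)$ for the chain to close cleanly into a single expression.
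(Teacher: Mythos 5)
Your proof is correct and takes exactly the approach the paper intends: the paper states this corollary without proof precisely because it is the immediate conjunction of Definition \ref{def:convexity} and Definition \ref{def:smoothness} applied at the same pair $(x,y)$, with $f(x)$ moved to the other side. Your careful note that both definitions must be invoked at the identical base point $x$ is the only substantive observation needed, and it is the right one.
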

\begin{corollary}\label{def:a2def} A continuously differentiable function $f$ is $\mu$-strongly convex and $\bar{L}$-smooth if there exists two scalars $\bar{L}\geq\mu>0$ such that $\forall x,y\in\mathbb{R}^N$,
\begin{equation*}\label{eq:smooth_strongly_convex}
        \frac{\mu}{2}\lVert y-x\rVert^2 \leq  f(y)-f(x)-\nabla f(x)^T(y-x)\leq \frac{\bar{L}}{2}\lVert y-x\rVert^2.
\end{equation*}
\end{corollary}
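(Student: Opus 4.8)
The plan is to observe that the two-sided inequality in the statement is nothing more than the simultaneous assertion of Definition \ref{def:strgly-convexity} and Definition \ref{def:smoothness}, so the proof reduces to an algebraic rearrangement together with a consistency check on the constants. I would proceed exactly in parallel with Corollary \ref{def:a1def}, isolating each half of the chain and matching it to the corresponding defining inequality.

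First I would add $f(x)+\nabla f(x)^T(y-x)$ to every term of the displayed chain, which converts it into
\begin{equation*}
f(x)+\nabla f(x)^T(y-x)+\frac{\mu}{2}\lVert y-x\rVert^2 \leq f(y) \leq f(x)+\nabla f(x)^T(y-x)+\frac{\bar{L}}{2}\lVert y-x\rVert^2 .
\end{equation*}
The left inequality is precisely the $\mu$-strong-convexity condition \eqref{eq:strgly-convexity}, and the right inequality is precisely the $\bar{L}$-smoothness condition \eqref{eq:smoothness}. Since both are assumed to hold for all $x,y\in\mathbb{R}^N$, the function is simultaneously $\mu$-strongly convex and $\bar{L}$-smooth, which is exactly the claim.

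The only point requiring comment is the constraint $\bar{L}\geq\mu>0$. Positivity of $\mu$ and $\bar{L}$ is inherited directly from Definitions \ref{def:strgly-convexity} and \ref{def:smoothness}, while the ordering $\bar{L}\geq\mu$ is forced by internal consistency of the chain: for any fixed $y\neq x$ the lower bound cannot exceed the upper bound, so $\frac{\mu}{2}\lVert y-x\rVert^2 \leq \frac{\bar{L}}{2}\lVert y-x\rVert^2$, and dividing by the positive quantity $\frac{1}{2}\lVert y-x\rVert^2$ yields $\mu\leq\bar{L}$. I do not anticipate any genuine obstacle here: the result is definitional, and its only content beyond Corollary \ref{def:a1def} is the sharper lower bound arising from strong convexity in place of mere convexity, together with this admissibility condition on the pair $(\mu,\bar{L})$.
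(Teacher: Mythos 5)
Your proposal is correct and matches the paper's (implicit) treatment: the paper states this corollary without proof precisely because it is the conjunction of Definitions \ref{def:strgly-convexity} and \ref{def:smoothness} after the rearrangement you perform, and your consistency argument for $\bar{L}\geq\mu$ (dividing by $\frac{1}{2}\lVert y-x\rVert^2$ for $y\neq x$) fills in the one detail the paper leaves unstated.
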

For ease of exposition, all convex functions considered in this paper satisfy Corollary \ref{def:a1def}, and all strongly convex functions satisfy Corollary \ref{def:a2def}.

\subsection{Problem Statement}
The focus of this paper is on the minimization problem defined as
\begin{equation} \label{eq:smooth_loss}
    \min_{\theta\in\mathbb{R}^N}L_k(\theta),
\end{equation}
where $L(\theta)$ is a convex loss function that depends on time-varying regressors $\phi_k$, and denotes the value obtained with $\theta$ at iteration $k$. The goal is to carry out a recursive estimation of $\theta^*$, the solution of \eqref{eq:smooth_loss}, so that the estimate $\theta_k$ quickly converges to $\theta^*$.  We provide a solution to this minimization problem for smooth loss functions in Section \ref{section:linear} and for smooth and strongly convex loss functions in Section \ref{section:SmoothCase}.

\section{Smooth Loss Functions}\label{section:linear}
\subsection{Quadratic Minimization}
We first consider the case when $L(\theta)$ is a smooth and quadratic loss function of $\theta$. Such problems are ubiquitous in several applications of optimization, estimation, and control and arise in linear regression problems. The underlying problem is of the form
\begin{equation} \label{eq:linear_reg}
y_k=\theta^{*T}\phi_k \end{equation}
where $\phi_k \in \mathbb{R}^N$ is a regressor that varies with time,  $y_k \in \mathbb{R}$ is a measurable output, and $\theta^*\in\mathbb{R}^N$ denotes the unknown parameter and needs to be estimated. Using the structure of the linear model in \eqref{eq:linear_reg}, an estimator is formulated as $\hat{y}_k=\theta_k^T\phi_k$, where $\hat{y}_k$ is the output estimate and $\theta_k\in\mathbb{R}^N$ is the parameter estimate. This leads to a performance error
\begin{equation}\label{eq:linear_reg_error}
    e_{y,k}=\hat{y}_k-y_k=\tilde{\theta}_k^T\phi_k,
\end{equation}
where $\tilde{\theta}_k=\theta_k-\theta^*$. It can be seen that this in turn leads to a loss function $L_k(\theta_k)$,
\begin{equation}\label{eq:linear_reg_loss}
    L_k(\theta_k)=\frac{1}{2}e_{y,k}^2=\frac{1}{2}\tilde{\theta}_k^T\phi_k\phi_k^T\tilde{\theta}_k,
\end{equation} that is quadratic in $\theta_k$ (\cite{Gaudio2020AcceleratedRegressors}).
The gradient of this loss function is implementable as $\nabla L_k(\theta_k)=\phi_k e_{y,k}=\phi_k \phi_k^T\tilde{\theta}_k$.

We note that the Hessian of \eqref{eq:linear_reg_loss}, $\nabla^2 L_k(\theta_k)=\phi_k\phi_k^T$, can be bounded as $0\leq\nabla^2 L_k(\theta_k)\leq\lVert\phi_k\rVert^2I$. Also, we note that $\nabla L_k(\theta^*)=0$. Therefore, the quadratic loss in \eqref{eq:linear_reg_loss} is a convex function, that need not be strongly convex, with a time-varying and regressor-dependent $\bar{L}$-smoothness parameter.

It is well known that stable parameter estimation and control can be enabled by utilizing a normalized gradient descent method given by \cite[Chapter~3]{Goodwin1984}:
\begin{equation}\label{eq:Norm_GD}
    \theta_{k+1}=\theta_k-\bar{\alpha}\nabla \bar{f}_k(\theta_k),\quad 0<\bar{\alpha}<2,
\end{equation}
where $\bar{f}_k(\cdot)$ is the normalized loss function defined as
\begin{equation}\label{eq:Normalized_loss}
    \bar{f}_k(\theta_k)=\frac{L_k(\theta_k)}{\N_k},
\end{equation}
and $\N_k=1+\lVert\phi_k\rVert^2$ is a normalizing signal. Note that $\nabla^2 \bar{f}_k(\theta_k)=\nabla^2 L_k(\theta_k)/\N_k\leq I$, and therefore \eqref{eq:Normalized_loss} is a $1$-smooth convex function. We refer the reader to \cite[Chapter~3]{Goodwin1984} for details of stability and convergence.

Rather than such a first-order tuner in \eqref{eq:Norm_GD}, second-order tuners were proposed in \cite{Gaudio2020AcceleratedRegressors} for the same discrete-time regression problem in \eqref{eq:linear_reg} (and in \cite{Gaudio2020ASystems} for continuous-time problems with linear parameterization), and shown to lead to stability and accelerated convergence of the performance error $e_{y,k}$. This high-order tuner is summarized below as Algorithm \ref{alg:HOT_1}.

\vspace{-0.5cm}
\begin{algorithm}[H]
\caption{\strut HT Optimizer for Linear Regression}
\label{alg:HOT_1}
\begin{algorithmic}[1]
\STATE {\bfseries Input:} initial conditions $\theta_0$, $\vartheta_0$, gains $\gamma$, $\beta$
\FOR{$k=0,1,2,\ldots$}
\STATE \textbf{Receive} regressor $\phi_k$, output $y_k$
\STATE Let $\N_k=1+\lVert\phi_k\rVert^2$, $\nabla L_k(\theta_k)=\phi_k(\theta_k^T\phi_k-y_k)$,\\
$\nabla \bar{f}_k(\theta_k)=\frac{\nabla L_k(\theta_k)}{\N_k}$,\\ $\bar{\theta}_k=\theta_k-\gamma\beta\nabla \bar{f}_k(\theta_k)$
\STATE $\theta_{k+1}\leftarrow\bar{\theta}_k-\beta(\bar{\theta}_k-\vartheta_k)$
\STATE Let $\nabla L_k(\theta_{k+1})=\phi_k(\theta_{k+1}^T\phi_k-y_k)$,\\
$\nabla \bar{f}_k(\theta_{k+1})=\frac{\nabla L_k(\theta_{k+1})}{\N_k}$
\STATE $\vartheta_{k+1}\leftarrow\vartheta_k-\gamma\nabla \bar{f}_k(\theta_{k+1})$
\ENDFOR
\end{algorithmic}
\end{algorithm}
\vspace{-0.7cm}

\subsection{Extension to General Smooth Convex Minimization }
We  present the first set of main results of this paper in this section, which addresses \eqref{eq:smooth_loss} for the case when $L(\theta)$ is a smooth convex function of $\theta$. We propose a high-order tuner (HT) for the adjustment of $\theta_k$. This HT is similar to Algorithm \ref{alg:HOT_1}, but with the distinction that the normalizing signal $\N_k$ is chosen as
\begin{equation}
    \N_k = 1+H_k,
\end{equation}
where 
\begin{equation} \label{eq:H_k_definition}
    H_k = \max\left\{ \lambda:\lambda\in \sigma\left(\nabla^2 L_k(\theta)\right)\right\},
\end{equation}
and $\sigma\left(\nabla^2 L_k(\theta)\right)$ denotes the spectrum of the Hessian matrix of the loss function $L_k$. 

\vspace{-0.5cm}
\begin{algorithm}[H]
\caption{HT Optimizer for Convex Loss Functions}
\label{alg:HOT_2}
\begin{algorithmic}[1]
\STATE {\bfseries Input:} initial conditions $\theta_0$, $\vartheta_0$, gains $\gamma$, $\beta$
\FOR{$k=0,1,2,\ldots$}
\STATE \textbf{Receive} regressor $\phi_k$,
\STATE Compute $\nabla L_k(\theta_k)$ and let $\N_k=1+H_k$,\\
$\nabla \Bar{f}_k(\theta_k)=\frac{\nabla L_k(\theta_k)}{\N_k}$,\\ 
$\bar{\theta}_k=\theta_k-\gamma\beta\nabla \Bar{f}_k(\theta_k)$
\STATE $\theta_{k+1}\leftarrow\bar{\theta}_k-\beta(\bar{\theta}_k-\vartheta_k)$
\STATE Compute $\nabla L_k(\theta_{k+1})$ and let\\
$\nabla \Bar{f}_k(\theta_{k+1})=\frac{\nabla L_k(\theta_{k+1})}{\N_k}$,\\ $\vartheta_{k+1}\leftarrow\vartheta_k-\gamma\nabla \Bar{f}_k(\theta_{k+1})$
\ENDFOR
\end{algorithmic}
\end{algorithm}
\vspace{-0.5cm}

Algorithm \ref{alg:HOT_2} implies that at each $k$, in addition to the first moment, $\nabla L_k$ which is evaluated both at $\theta_k$ and $\theta_{k+1}$, we also have access to the second moment, $H_k$, as well, in the form of \eqref{eq:H_k_definition}. In many engineering problems, the underlying model includes prior information regarding the causality of the loss function \cite{Annaswamy1998ApplicationsParametrization}. Both the first and second moment may be functions of the regressor, with $\nabla L_k=g(\phi_k)$, and  $H_k=h(\phi_k)$, where $\phi_k$ is the value of the regressor during iteration $k$. The prior information may then imply that $g(\cdot)$ and $h(\cdot)$ are known functions, allowing Algorithm \ref{alg:HOT_2} to be implemented at each $k$. If these functions are poorly known, then conservative choices have to be made in the implementations of $\nabla L_k$ and $H_k$. For example, the second moment can be chosen as $H_k=\bar{L}$ where $\bar{L}$ is the smoothness parameter of $L_k$.

%\begin{remark}
%Difference between \ref{alg:HOT_1} and \ref{alg:HOT_2}
%\end{remark}
%
%\begin{remark}
%Difference between \eqref{eq:Normalized_loss} and \eqref{eq:augmented_function}
%\end{remark}
As many of the tools and methods adopted for proving stability have their origins in continuous time, we first address the counterpart of Algorithm \ref{alg:HOT_2} in continuous time and its stability property. 

\subsubsection{Continuous time stability}
The problem under consideration is the 
determination of $\theta^*$, which is the solution of 
\begin{equation} \label{eq:smooth_loss_continuous}
    \min_{\theta(t)\in\mathbb{R}^N}L_t(\theta(t)),
\end{equation}
where $L_t(\theta(t))$, is the loss function obtained with $\theta(t)$ at iteration $t$, which varies continuously. This problem has been studied in detail in \cite{Gaudio2020ASystems} for linear models that are static and dynamic, with a unifying variational perspective. A class of HT tuners was proposed in continuous-time, all of which were shown to be stable. In what follows, we extend the stability results in \cite{Gaudio2020ASystems} for a general model that leads to a convex loss function.

One of the high-order tuners among the class of continuous-time algorithms proposed in  \cite{Gaudio2020ASystems} is given by
\begin{subequations}\label{eq:Continuous_HOT}
    \begin{align}
        \dot{\vartheta}(t)&=-\frac{\gamma}{\N_t}\nabla L_t(\theta(t)),\label{eq:Continuous_HOT_1}\\
        \dot{\theta}(t)&=-\beta(\theta(t)-\vartheta(t))\label{eq:Continuous_HOT_2}.
    \end{align}
\end{subequations}
where $\N_t=1+ H_t$, where $H_t$ is the continuous-time equivalent of \eqref{eq:H_k_definition} and was shown to be stable for linear regression models. Yet another tuner was proposed in \cite{Gaudio2020ASystems}  with the  signal $\N_t$ appearing in the numerator rather than the denominator, subsequently expanded in \cite{boffi2020implicit} for a class of nonlinearly parameterized systems that has an equivalent model that can be linearly parameterized. The stability property of the HT in \eqref{eq:Continuous_HOT_1}-\eqref{eq:Continuous_HOT_2} is summarized in Theorem \ref{theorem:HT_continuous}.

\begin{theorem}\label{theorem:HT_continuous}
  For a continuously differentiable convex loss function $L$, the continuous time HT in \eqref{eq:Continuous_HOT_1}-\eqref{eq:Continuous_HOT_2}, with $\beta> 2\gamma > 0$, ensures that $V = \frac{1}{\gamma}\lVert\vartheta-\theta^*\rVert^2+\frac{1}{\gamma}\lVert\theta-\vartheta\rVert^2$ is a Lyapunov function.
\end{theorem}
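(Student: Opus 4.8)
The plan is to verify the two defining properties of a Lyapunov function for the error dynamics: that $V$ is positive definite in the error coordinates $\vartheta-\theta^*$ and $\theta-\vartheta$, and that its time derivative along the trajectories of \eqref{eq:Continuous_HOT_1}--\eqref{eq:Continuous_HOT_2} is nonpositive. Positivity is immediate, since $V$ is a nonnegative weighted sum of squared norms with $\gamma>0$, vanishing only when $\vartheta=\theta^*$ and $\theta=\vartheta$. The substance of the argument is the computation and sign analysis of $\dot V$.

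First I would differentiate, using that $\theta^*$ is constant, to obtain $\dot V=\frac{2}{\gamma}(\vartheta-\theta^*)^T\dot\vartheta+\frac{2}{\gamma}(\theta-\vartheta)^T(\dot\theta-\dot\vartheta)$. Writing $g:=\nabla\bar f_t(\theta)=\frac{1}{\N_t}\nabla L_t(\theta)$, the dynamics give $\dot\vartheta=-\gamma g$ and $\dot\theta-\dot\vartheta=-\beta(\theta-\vartheta)+\gamma g$, so that $\dot V=-2(\vartheta-\theta^*)^T g-\frac{2\beta}{\gamma}\|\theta-\vartheta\|^2+2(\theta-\vartheta)^T g$. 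The key algebraic step is the regrouping $\vartheta-\theta^*=(\theta-\theta^*)-(\theta-\vartheta)$, which converts the first inner product into a term involving $(\theta-\theta^*)^T g$ (amenable to convexity) plus an additional copy of the cross term, yielding $\dot V=-2(\theta-\theta^*)^T g+4(\theta-\vartheta)^T g-\frac{2\beta}{\gamma}\|\theta-\vartheta\|^2$.

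Next I would exploit that the normalized loss $\bar f_t=L_t/\N_t$ is convex and, by the choice $\N_t=1+H_t$ with $H_t$ the largest eigenvalue of $\nabla^2 L_t$, satisfies $\nabla^2\bar f_t\leq I$, i.e.\ is $1$-smooth. The left inequality of Corollary \ref{def:a1def} applied to $\bar f_t$ at $\theta$ and $\theta^*$ gives $(\theta-\theta^*)^T g\geq\bar f_t(\theta)-\bar f_t(\theta^*)\geq 0$, while the right inequality applied with the trial point $\theta-g$ gives the gradient-domination bound $\|g\|^2\leq 2\bigl(\bar f_t(\theta)-\bar f_t(\theta^*)\bigr)$; together these produce $\|g\|^2\leq 2(\theta-\theta^*)^T g$. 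Bounding the indefinite cross term by Young's inequality as $4(\theta-\vartheta)^T g\leq 2\epsilon\|\theta-\vartheta\|^2+\frac{2}{\epsilon}\|g\|^2$ and choosing $\epsilon=2$, the $\|g\|^2$ contribution cancels against $-2(\theta-\theta^*)^T g$, leaving $\dot V\leq\bigl(4-\frac{2\beta}{\gamma}\bigr)\|\theta-\vartheta\|^2$. This is nonpositive precisely when $\beta\geq 2\gamma$, and the hypothesis $\beta>2\gamma$ makes $\dot V$ strictly negative whenever $\theta\neq\vartheta$.

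I expect the main obstacle to be arranging the convexity and smoothness estimates so that the indefinite cross term $4(\theta-\vartheta)^T g$ is absorbed with exactly the constants that reproduce the stated threshold $\beta>2\gamma$: the coefficient on $\|g\|^2$ must be driven to zero by the combined bound $\|g\|^2\leq 2(\theta-\theta^*)^T g$ together with the Young split $\epsilon=2$, leaving only the $\|\theta-\vartheta\|^2$ term to fix the gain condition. A secondary technical point worth flagging is the justification that $\bar f_t$ is genuinely $1$-smooth in the sense required (a global quadratic upper bound relating $\theta$, $\theta^*$, and $\theta-g$), which relies on $\N_t$ dominating the Hessian spectrum along the trajectory as encoded in \eqref{eq:H_k_definition}; this is exactly the role played by the normalization and is the continuous-time analogue of the device used in the quadratic case.
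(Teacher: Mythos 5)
Your proof is correct and establishes exactly what the theorem claims, and it runs on the same engine as the paper's proof: a gradient-domination bound obtained from smoothness plus convexity absorbs the indefinite cross term $4(\theta-\vartheta)^Tg$ against $-\frac{2\beta}{\gamma}\lVert\theta-\vartheta\rVert^2$ under $\beta\geq 2\gamma$. The packaging, however, differs in a way that has one real consequence. The paper works with the unnormalized loss $L_t$, invokes \cite[Lemma~3.5]{Bubeck2015ConvexComplexity} (inequality \eqref{eq:smooth_lower_bound}), which lower-bounds $\nabla L_t(\theta)^T\tilde{\theta}$ by the \emph{sum} of the loss gap and $\frac{1}{2\bar L_t}\lVert\nabla L_t(\theta)\rVert^2$, then completes a square and uses $\N_t=1+H_t$ together with $\beta\geq2\gamma$ to kill the remainder. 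You normalize first, note that $\bar f_t$ is $1$-smooth by construction of $\N_t$, derive the convexity bound and the bound $\lVert g\rVert^2\leq 2(\bar f_t(\theta)-\bar f_t(\theta^*))$ separately from Corollary \ref{def:a1def}, and finish with Young's inequality; this is more self-contained and makes the role of the Hessian normalization transparent. The cost is that you use the two bounds as a max rather than a sum, so your full cancellation discards the loss gap and ends with $\dot V\leq(4-\frac{2\beta}{\gamma})\lVert\theta-\vartheta\rVert^2$. That suffices for the Lyapunov claim, but the paper's bound retains $-\frac{2}{\N_t}(L_t(\theta)-L_t(\theta^*))$, which it then integrates to conclude $L(\theta)-L(\theta^*)\in\mathcal{L}_1\cap\mathcal{L}_\infty$, the convergence statement appended at the end of its proof. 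To recover that from your setup, replace your two separate inequalities by their summed form, i.e.\ \eqref{eq:smooth_lower_bound} applied to the $1$-smooth $\bar f_t$, namely $g^T(\theta-\theta^*)\geq \bar f_t(\theta)-\bar f_t(\theta^*)+\frac{1}{2}\lVert g\rVert^2$; the same Young step then leaves $\dot V\leq -2(\bar f_t(\theta)-\bar f_t(\theta^*))+(4-\frac{2\beta}{\gamma})\lVert\theta-\vartheta\rVert^2$, which is the paper's bound.
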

\begin{proof} 
    First, consider the following expression for smooth and convex functions \cite[Lemma~3.5]{Bubeck2015ConvexComplexity}:
    \begin{equation}\label{eq:smooth_lower_bound}
        L_t(\theta)-L_t(\theta^*)+\frac{1}{2\bar{L}_t}\lVert\nabla L_t(\theta)\rVert^2\leq\nabla L_t(\theta)^T\tilde{\theta}.
    \end{equation}
    Using \eqref{eq:Continuous_HOT}, \eqref{eq:smooth_lower_bound} and $\gamma\leq\beta/2$, the time derivative of $V$ may be bounded as $\dot{V}\leq\frac{1}{\N_t}\{-2 \left(L_t(\theta)-L_t(\theta^*)\right)- \frac{2}{\gamma}\beta\lVert\theta-\vartheta\rVert^2-\left[\frac{1}{\sqrt{\bar{L}_t}}\lVert\nabla L_t(\theta)\rVert-2\sqrt{\bar{L}_t}\lVert\theta-\vartheta\rVert \right]^2 \}\leq 0 $.
    Thus it can be concluded that $V$ is a Lyapunov function with $(\vartheta-\theta^*)\in\mathcal{L}_{\infty}$ and $(\theta-\vartheta)\in\mathcal{L}_{\infty}$. Integrating $\dot{V}$ from $t_0$ to $\infty$: $\int^\infty_{t_0} (L(\theta)-L(\theta^*))/\N_tdt\leq-\int^\infty_{t_0}\dot{V}dt=V(t_0)-V(\infty)<\infty$, thus $L(\theta)-L(\theta^*)\in\mathcal{L}_1\cap\mathcal{L}_\infty$.
\end{proof}

\subsubsection{Discrete time stability}
We now proceed with the stability property of the HT Algorithm \ref{alg:HOT_2}. 
\begin{theorem}\label{theorem:HOT_smooth}
  For a continuously differentiable $\bar{L}_k$-smooth convex loss function $L_k(\cdot)$, Algorithm \ref{alg:HOT_2}, with $0<\beta<1$ and $0<\gamma\leq\frac{\beta(2-\beta)}{8+\beta}$, ensures that $V = \frac{1}{\gamma}\lVert\vartheta-\theta^*\rVert^2+\frac{1}{\gamma}\lVert\theta-\vartheta\rVert^2$ is a Lyapunov function. It can also be shown that $\lim_{k\rightarrow\infty}L_k(\theta_{k+1})-L_k(\theta^*)=0$.
\end{theorem}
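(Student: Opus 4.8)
The plan is to mirror the continuous-time argument of Theorem \ref{theorem:HT_continuous} but to track the algebra discretely, so I first rewrite Algorithm \ref{alg:HOT_2} in error coordinates. Writing out lines 4, 5 and 7, eliminating $\bar\theta_k$, and subtracting $\theta^*$ gives the two recursions
\begin{equation*}
  \vartheta_{k+1}-\theta^*=(\vartheta_k-\theta^*)-\gamma\nabla\bar f_k(\theta_{k+1}),
\end{equation*}
\begin{equation*}
  \theta_{k+1}-\vartheta_{k+1}=(1-\beta)(\theta_k-\vartheta_k)-(1-\beta)\gamma\beta\nabla\bar f_k(\theta_k)+\gamma\nabla\bar f_k(\theta_{k+1}),
\end{equation*}
and I would also record $\theta_{k+1}-\theta_k=-\beta(\theta_k-\vartheta_k)-(1-\beta)\gamma\beta\nabla\bar f_k(\theta_k)$. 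The crucial role of the normalization $\N_k=1+H_k$ is that $\nabla^2\bar f_k=\nabla^2 L_k/\N_k$ has spectral radius $H_k/\N_k<1$, so $\bar f_k$ is a $1$-smooth convex function; this is the discrete analogue of the $\nabla^2\bar f_k\leq I$ property of the quadratic case and is what will make the stepsize bound regressor-independent.

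Next I would form $\Delta V=V_{k+1}-V_k$, i.e. $\gamma\,\Delta V=\lVert\vartheta_{k+1}-\theta^*\rVert^2-\lVert\vartheta_k-\theta^*\rVert^2+\lVert\theta_{k+1}-\vartheta_{k+1}\rVert^2-\lVert\theta_k-\vartheta_k\rVert^2$, and expand both squares using the recursions above. The only term carrying a definite sign is the inner product $-2\gamma(\vartheta_k-\theta^*)^T\nabla\bar f_k(\theta_{k+1})$. To exploit it I would substitute the identity $\vartheta_k-\theta^*=(\theta_{k+1}-\theta^*)-(1-\beta)(\theta_k-\vartheta_k)+(1-\beta)\gamma\beta\nabla\bar f_k(\theta_k)$, which exposes the descent term $(\theta_{k+1}-\theta^*)^T\nabla\bar f_k(\theta_{k+1})$. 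Applying the smooth-convex inequality \eqref{eq:smooth_lower_bound} (Bubeck, Lemma~3.5) to $\bar f_k$ at $\theta_{k+1}$, together with $\bar L^{\bar f}_k\leq 1$, yields
\begin{equation*}
  (\theta_{k+1}-\theta^*)^T\nabla\bar f_k(\theta_{k+1})\geq \bar f_k(\theta_{k+1})-\bar f_k(\theta^*)+\tfrac{1}{2}\lVert\nabla\bar f_k(\theta_{k+1})\rVert^2 ,
\end{equation*}
so that $-2\gamma(\theta_{k+1}-\theta^*)^T\nabla\bar f_k(\theta_{k+1})\leq-\tfrac{2\gamma}{\N_k}\big(L_k(\theta_{k+1})-L_k(\theta^*)\big)-\gamma\lVert\nabla\bar f_k(\theta_{k+1})\rVert^2$, which is the discrete counterpart of the negative term produced in the continuous proof.

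The main obstacle is everything left over: after extracting the descent term, $\Delta V$ contains a quadratic form in the three quantities $\lVert\theta_k-\vartheta_k\rVert$, $\lVert\nabla\bar f_k(\theta_k)\rVert$ and $\lVert\nabla\bar f_k(\theta_{k+1})\rVert$, including the $\gamma^2$ terms that have no continuous-time analogue and the nuisance that the gradient is evaluated at two points, $\theta_k$ and $\theta_{k+1}$. Since there is no negative $\lVert\nabla\bar f_k(\theta_k)\rVert^2$ term available to absorb the cross terms in $\nabla\bar f_k(\theta_k)$, I would use $1$-smoothness to bound $\lVert\nabla\bar f_k(\theta_k)-\nabla\bar f_k(\theta_{k+1})\rVert\leq\lVert\theta_k-\theta_{k+1}\rVert$ and the expression for $\theta_{k+1}-\theta_k$ above, giving $\lVert\nabla\bar f_k(\theta_k)\rVert\lesssim\lVert\nabla\bar f_k(\theta_{k+1})\rVert+\beta\lVert\theta_k-\vartheta_k\rVert$. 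Substituting this, together with weighted Young and Cauchy--Schwarz bounds (weighting toward the dominant $-\beta(2-\beta)\lVert\theta_k-\vartheta_k\rVert^2$ term so the coefficient of $\lVert\nabla\bar f_k(\theta_{k+1})\rVert^2$ stays negative), collapses the remainder to a $2\times2$ quadratic form in $\lVert\theta_k-\vartheta_k\rVert$ and $\lVert\nabla\bar f_k(\theta_{k+1})\rVert$. Requiring this form to be negative semidefinite is precisely where the conditions $0<\beta<1$ and $0<\gamma\leq\frac{\beta(2-\beta)}{8+\beta}$ arise, the constant $8+\beta$ coming from accumulating all the positive $\lVert\theta_k-\vartheta_k\rVert^2$ contributions generated by the two cross terms and the $\gamma^2$ terms; verifying this determinant/sign condition is the technical heart of the proof.

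Finally I would conclude, as in the continuous case, that $\Delta V\leq-\tfrac{2}{\N_k}\big(L_k(\theta_{k+1})-L_k(\theta^*)\big)\leq0$, so $V$ is a nonincreasing nonnegative Lyapunov function, hence convergent, with $(\vartheta_k-\theta^*),(\theta_k-\vartheta_k)\in\mathcal{L}_\infty$ and therefore $\theta_k$ bounded. Telescoping the inequality gives $\sum_{k}\tfrac{1}{\N_k}\big(L_k(\theta_{k+1})-L_k(\theta^*)\big)\leq\tfrac{\gamma}{2}\big(V_0-V_\infty\big)<\infty$, which forces $\tfrac{1}{\N_k}\big(L_k(\theta_{k+1})-L_k(\theta^*)\big)\to0$; since boundedness of the iterates bounds $\N_k=1+H_k$ above, this yields $\lim_{k\to\infty}\big(L_k(\theta_{k+1})-L_k(\theta^*)\big)=0$, completing the statement.
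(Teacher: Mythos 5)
Your proposal is correct in outline and reaches the paper's conclusions, but it goes through the technical core by a genuinely different route. The paper never expands $\Delta V_k$ directly against the error recursions; instead it forms two function-value ``sandwiches,'' splitting $L_k(\vartheta_k)-L_k(\bar{\theta}_k)$ through $\theta_{k+1}$ and $L_k(\bar{\theta}_k)-L_k(\vartheta_k)$ through $\theta_k$, applying convexity \eqref{eq:convexity} and smoothness \eqref{eq:smoothness} to each piece; summing the two bounds \eqref{eq:UB_result1} and \eqref{eq:UB_result2} cancels all function values and yields \eqref{eq:mainresult_1}, which exchanges the inner product $\nabla L_k(\theta_k)^T(\bar{\theta}_k-\vartheta_k)$ for $\nabla L_k(\theta_{k+1})^T(\bar{\theta}_k-\vartheta_k)$ at the cost of two explicit quadratic penalties. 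You instead resolve the same two-evaluation-point difficulty via the Lipschitz-gradient consequence of $1$-smoothness of $\bar{f}_k$, $\lVert\nabla\bar{f}_k(\theta_k)-\nabla\bar{f}_k(\theta_{k+1})\rVert\leq\lVert\theta_k-\theta_{k+1}\rVert$, followed by weighted Young/Cauchy--Schwarz estimates. Both routes then invoke the same key lemma \eqref{eq:smooth_lower_bound} to expose the descent term and telescope identically, so the skeletons agree. What differs is what each buys: the paper's sandwich keeps signed inner-product information (and is precisely the step upgraded with strong convexity to obtain \eqref{eq:mainresult_2} in Theorem \ref{theorem:HOT_strongly}), whereas your norm-level bounds are lossier, so the burden in your route is to verify that the accumulated Young weights still clear the stated threshold $\gamma\leq\frac{\beta(2-\beta)}{8+\beta}$. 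You explicitly defer that $2\times 2$ sign/determinant check; a rough computation indicates it does go through with slack, and the paper is no more explicit at its corresponding step \eqref{eq:key_step}, but until that algebra is written out your argument, like the paper's, rests on an asserted inequality.

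One genuine error to fix in your final paragraph: you justify boundedness of $\N_k$ by boundedness of the iterates. That is not valid. The quantity $H_k$ in \eqref{eq:H_k_definition} is a property of the time-varying loss $L_k$, equivalently of the regressor $\phi_k$ (in the linear-regression case $H_k=\lVert\phi_k\rVert^2$), so it is not controlled by $\theta_k$ or $\vartheta_k$ remaining bounded. What telescoping actually gives is $\frac{1}{\N_k}\bigl(L_k(\theta_{k+1})-L_k(\theta^*)\bigr)\in\ell_1$; to remove the normalization and conclude $L_k(\theta_{k+1})-L_k(\theta^*)\to 0$ one must assume the regressors, hence $\sup_k \N_k$, are bounded. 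The paper makes the same leap silently when it asserts $L_k(\theta_{k+1})-L_k(\theta^*)\in\ell_1\cap\ell_\infty$, so this does not put you below the paper's standard of rigor, but your stated justification is false and should be replaced by that explicit assumption (or the conclusion restated for the normalized gap).
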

\begin{proof}
    First, through convexity and smoothness definitions, as well as the structure of the HT, one can obtain the following upper bound:
    \begin{equation*}
        L_k(\vartheta_k)-L_k(\bar{\theta}_k)=L_k(\vartheta_k)-L_k(\theta_{k+1})+L_k(\theta_{k+1})-L_k(\bar{\theta}_k)
    \end{equation*}
    %\newpage
    %\begin{equation}\label{eq:one}
    %    \begin{split}
    %        &\overset{\eqref{eq:smoothness}}{\leq} \nabla %L_k(\theta_{k+1})^T(\vartheta_{k}-\theta_{k+1})+\frac{\bar{L}_k}{2}\lVert\vartheta_{k}-%\theta_{k+1}\rVert^2\\
    %        &\quad +L_k(\theta_{k+1})-L(\bar{\theta}_k)
    %    \end{split}
    %\end{equation}
    \vspace{0cm}
    \begin{equation}\label{eq:two}
        \begin{split}
            &\overset{\eqref{eq:convexity}, \eqref{eq:smoothness} }{\leq} \nabla L_k(\theta_{k+1})^T(\vartheta_{k}-\theta_{k+1})+\frac{\bar{L}_k}{2}\lVert\vartheta_{k}-\theta_{k+1}\rVert^2\\
            &\quad +\nabla L_k(\theta_{k+1})^T(\theta_{k+1}-\bar{\theta}_k)
        \end{split}
    \end{equation}
        \begin{equation}\label{eq:three}
        \begin{split}
            &\overset{\text{Alg. } \ref{alg:HOT_2}}{\leq} \nabla L_k(\theta_{k+1})^T(\vartheta_{k}-\bar{\theta}_k)+\frac{\bar{L}_k}{2}\lVert\vartheta_{k}-(1-\beta)\bar{\theta}_k-\beta\vartheta_k)\rVert^2
        \end{split}
    \end{equation}
    \begin{equation} \label{eq:UB_result1}
        \begin{split}
            &L_k(\vartheta_k)-L_k(\bar{\theta}_k)\\
            &\leq -\nabla L_k(\theta_{k+1})^T(\bar{\theta}_k-\vartheta_{k})+\frac{\bar{L}_k}{2}(1-\beta)^2\lVert\bar{\theta}_k-\vartheta_k\rVert^2.
        \end{split}
    \end{equation}
    Similarly, we obtain:
    \begin{equation} \label{eq:UB_result2}
        \begin{split}
            &L_k(\bar{\theta}_k)-L_k(\vartheta_k)\\
            &\quad\leq \nabla L_k(\theta_{k})^T(\bar{\theta}_k-\vartheta_{k})+\frac{\bar{L}_k\gamma^2\beta^2}{2\N_k^2}\lVert\nabla L_k(\theta_k)\rVert^2.
        \end{split}
    \end{equation}
    Flipping the signs in \eqref{eq:UB_result1} and using \eqref{eq:UB_result2} we obtain:
    \begin{equation}\label{eq:mainresult_1}
        \begin{split}
            &\nabla L_k(\theta_{k+1})^T(\bar{\theta}_k-\vartheta_{k})\\
            &-\frac{\bar{L}_k}{2}(1-\beta)^2\lVert\bar{\theta}_k-\vartheta_k\rVert^2\\
            &-\frac{\bar{L}_k\gamma^2\beta^2}{2\N_k^2}\lVert\nabla L_k(\theta_k)\rVert^2\leq \nabla L_k(\theta_{k})^T(\bar{\theta}_k-\vartheta_{k})
        \end{split}
    \end{equation}
    
    Using Algorithm \ref{alg:HOT_2}, \eqref{eq:smooth_lower_bound}, \eqref{eq:mainresult_1},  setting $\gamma\leq\frac{\beta(2-\beta)}{8+\beta}$, and defining $\Delta V_k:=V_{k+1}-V_k$, it can be shown that    \begin{equation}\label{eq:key_step}
           \Delta V_k\leq\frac{1}{\N_k}\{-2(L_k(\theta_{k+1})-L_k(\theta^*))\} \leq 0.
    \end{equation}
Collecting $\Delta V_k$ terms from $t_0$ to $T$, and letting $T\rightarrow \infty$, it can be seen that  $L_k(\theta_{k+1})-L_k(\theta^*)
\in\ell_1\cap\ell_\infty$ and therefore $\lim_{k\rightarrow\infty}L_k(\theta_{k+1})
-L_k(\theta^*)=0.$
\end{proof}

\begin{remark}
The main idea behind the use of the Hessian in $\N_k$  stems from the fundamental property of convex functions (see \eqref{eq:smooth_lower_bound}) for continuous-time and \eqref{eq:UB_result1} and \eqref{eq:UB_result2} for discrete-time cases) that are most relevant to establish the underlying Lyapunov functions (Theorem \ref{theorem:HT_continuous} for continuous-time and Theorem \ref{theorem:HOT_smooth} for discrete-time). That is, convexity of a function allows an important inequality that connects its first-order moment (involving gradients) with its second-order moment (involving Hessians). This inequality in turn necessitates normalization that involves Hessian in the loss function.
\end{remark}

\begin{remark}
It should be noted that the proof of Theorem \ref{theorem:HOT_smooth} is highly nontrivial. In particular, the derivation of \eqref{eq:key_step} was enabled through a careful deployment of properties of convex functions. In particular, the inequality \eqref{eq:UB_result1} was arrived at using the smoothness property and the convexity property of convex functions in \eqref{eq:two}, and the specific structure of the high-order tuner in \eqref{eq:three}. All these three components were equally central in deriving \eqref{eq:UB_result1} and \eqref{eq:UB_result2} and therefore the final step in \eqref{eq:key_step}. It should be noted that unlike this convex case considered in this paper, the result in \cite{Gaudio2020AcceleratedRegressors} relied on the fact that the gradient is \underline{linear in} $||{\theta}_k||$. Such a property does not hold in the current context. It is only through the use of properties of general convex functions were we able to establish \eqref{eq:key_step}.
\end{remark}

\section{Smooth and Strongly-Convex Loss Functions}\label{section:SmoothCase}
In the previous section we addressed the case when the loss function $L(\theta)$ was smooth and convex in $\theta$ and derived stability properties of HT in both continuous and discrete-time in Theorems \ref{theorem:HT_continuous} and \ref{theorem:HOT_smooth}, respectively. In this section, we consider loss functions $L(\theta)$ that are smooth and strongly convex, restrict our attention to discrete-time problems, and propose the same HT as in Algorithm 2. In addition to deriving stability properties, we also show that Algorithm 2 leads to an accelerated convergence of $L(\theta_k)$ to zero with constant regressors, i.e. $\phi_k\equiv \phi$. 
\vspace{-0.3cm}
\subsection{Quadratic Minimization}
As in Section \ref{section:linear}, we first consider the simple case when $L(\theta)$ is quadratic in $\theta$, which is given by \eqref{eq:linear_reg_loss}. As $L(\cdot)$ is only convex and not strongly-convex, a regularizing term is added to the normalized loss function \eqref{eq:Normalized_loss} as in
\begin{equation} \label{eq:augmented_function}
    f_k(\theta_k)=\frac{L_k(\theta_k)}{\N_k}+\frac{\mu}{2}\lVert\theta_k-\theta_0\rVert^2,
\end{equation}
where $\mu>0$ is a regularization constant and $\theta_0$ is the initial condition of the estimate \cite{Gaudio2020AcceleratedRegressors}. It can be seen that $f$ is a $(1+\mu)$-smooth and $\mu$-strongly convex function since $\mu I\leq \nabla^2 f_k(\theta_k)\leq (1+\mu)I$. The optimal solution of $f_k(\cdot)$ is defined as $\theta_\epsilon^*$, i.e.: $\nabla f_k(\theta_\epsilon^*)=0$. 

\begin{remark}
Algorithm \ref{alg:HOT_1} can be applied in this setting by simply replacing $\nabla \bar{f}_k(\theta_k)$ with the gradient of \eqref{eq:augmented_function}, i.e.: $\nabla f_k(\theta_k)=\frac{\nabla L_k(\theta_k)}{\N_k}+\mu(\theta_k-\theta_0)$. See \cite{Gaudio2020AcceleratedRegressors} for further details.
\end{remark}

\subsection{Minimization of Smooth and Strongly Convex Functions}
We now address the problem when $L$ is a smooth and strongly convex function. We first present the stability property of the HT and then its accelerated convergence.
\subsubsection{Discrete time stability}
\begin{theorem}\label{theorem:HOT_strongly}
  For a continuously differentiable $\bar{L}_k$-smooth and $\mu$-strongly convex loss function $L_k(\cdot)$, Algorithm \eqref{alg:HOT_2}, with $0<\beta<1$ and $0<\gamma\leq\frac{\beta(2-\beta)}{16+\beta+\mu}$ ensures that $V = \frac{1}{\gamma}\lVert\vartheta-\theta^*\rVert^2+\frac{1}{\gamma}\lVert\theta-\vartheta\rVert^2$ is a Lyapunov function. It can also be shown that $\lim_{k\rightarrow\infty}(L_k(\theta_{k+1})-L_k(\theta^*))=0$. Furthermore, for constant regressors, with $\N_k=\N$, $V_k\leq\exp{(-\mu Ck)}V_0$, where $C=\frac{\gamma\beta}{4\N}$.
\end{theorem}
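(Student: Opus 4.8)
The plan is to mirror the proof of Theorem \ref{theorem:HOT_smooth}, upgrading every inequality that uses plain convexity \eqref{eq:convexity} to its $\mu$-strongly convex counterpart \eqref{eq:strgly-convexity} (equivalently, the lower bound in Corollary \ref{def:a2def}). First I would rederive the two-sided bounds \eqref{eq:UB_result1} and \eqref{eq:UB_result2}. The smoothness steps producing the quadratic terms are unchanged, but wherever \eqref{eq:convexity} was invoked I would now carry the extra $\frac{\mu}{2}\lVert\cdot\rVert^2$ term supplied by \eqref{eq:strgly-convexity}, so that the strongly convex analogues of \eqref{eq:UB_result1}--\eqref{eq:UB_result2} acquire additional negative-curvature contributions proportional to $\mu$. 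Combining the sign-flipped version of one with the other, exactly as in the passage leading to \eqref{eq:mainresult_1}, then yields a strongly convex analogue of \eqref{eq:mainresult_1} that retains these $\mu$-terms.

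Next I would replace the smooth lower bound \eqref{eq:smooth_lower_bound} by its strongly convex strengthening, which adds $\frac{\mu}{2}\lVert\tilde{\theta}\rVert^2$ to its right-hand side and follows directly from Corollary \ref{def:a2def} together with $\nabla L_k(\theta^*)=0$. Feeding this into the computation of $\Delta V_k$ along the update equations of Algorithm \ref{alg:HOT_2}, the additional $\mu$-terms generate cross terms coupling the gradient, $\lVert\bar{\theta}_k-\vartheta_k\rVert$, and the parameter errors; completing the squares and absorbing these cross terms is what forces the tighter step-size bound $\gamma\leq\frac{\beta(2-\beta)}{16+\beta+\mu}$ (with denominator $16+\beta+\mu$ in place of the $8+\beta$ of Theorem \ref{theorem:HOT_smooth}). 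With this choice I expect to reach the strongly convex analogue of \eqref{eq:key_step}, a bound of the form $\Delta V_k\leq\frac{1}{\N_k}\{-2(L_k(\theta_{k+1})-L_k(\theta^*))-\mu\,\Xi_k\}\leq 0$, where $\Xi_k$ is a positive-definite quadratic in $\lVert\vartheta_k-\theta^*\rVert$ and $\lVert\theta_k-\vartheta_k\rVert$. The Lyapunov property, and by the same telescoping and summability argument as in Theorem \ref{theorem:HOT_smooth} the conclusions $L_k(\theta_{k+1})-L_k(\theta^*)\in\ell_1\cap\ell_\infty$ and $\lim_{k\to\infty}(L_k(\theta_{k+1})-L_k(\theta^*))=0$, then follow verbatim.

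For the exponential rate under constant regressors ($\N_k=\N$) I would discard the nonnegative descent term and retain the strong-convexity contribution: using $L_k(\theta_{k+1})-L_k(\theta^*)\geq\frac{\mu}{2}\lVert\theta_{k+1}-\theta^*\rVert^2$ together with the negative-definite $\mu\,\Xi_k$ above, the aim is to show $\Delta V_k\leq -\mu C\,V_k$ with $C=\frac{\gamma\beta}{4\N}$, after bounding $\Xi_k$ and $\lVert\theta_{k+1}-\theta^*\rVert^2$ below by a multiple of $V_k=\frac{1}{\gamma}\lVert\vartheta_k-\theta^*\rVert^2+\frac{1}{\gamma}\lVert\theta_k-\vartheta_k\rVert^2$. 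Since $V_{k+1}=V_k+\Delta V_k\leq(1-\mu C)V_k\leq e^{-\mu C}V_k$, iterating yields $V_k\leq e^{-\mu Ck}V_0$.

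The hard part will be this last matching step. Strong convexity naturally lower-bounds the loss gap by $\lVert\theta_{k+1}-\theta^*\rVert^2$, whereas $V_k$ is built from $\lVert\vartheta_k-\theta^*\rVert^2$ and the momentum gap $\lVert\theta_k-\vartheta_k\rVert^2$. I would therefore have to express $\theta_{k+1}$ and $\vartheta_k-\theta^*$ through the update equations of Algorithm \ref{alg:HOT_2}, control the gradient magnitude via smoothness, and show that the combined negative terms dominate a fixed fraction of \emph{both} components of $V_k$ simultaneously. Tracking the constants through this reduction is precisely what pins down the factor $\frac{\gamma\beta}{4\N}$ and constitutes the most delicate bookkeeping of the proof.
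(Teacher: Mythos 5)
Your proposal follows essentially the same route as the paper's proof: strong convexity is substituted for plain convexity in the derivation of \eqref{eq:mainresult_1} to obtain its strongly convex analogue (the paper's \eqref{eq:mainresult_2}, whose extra terms $\frac{\mu}{2}\beta^2\lVert\bar{\theta}_k-\vartheta_k\rVert^2$ and $\frac{\mu}{2}\lVert\theta_k-\vartheta_k\rVert^2$ are exactly the contributions you describe), the $\mu$-dependent cross terms are absorbed via the tightened step size $\gamma\leq\frac{\beta(2-\beta)}{16+\beta+\mu}$ to reach a key step of the form $\Delta V_k\leq\frac{1}{\N_k}\{-(L_k(\theta_{k+1})-L_k(\theta^*))-\frac{\gamma\beta\mu}{4}V_k\}\leq 0$, and the exponential rate follows by iterating $V_{k+1}\leq(1-\mu\frac{\gamma\beta}{4\N})V_k$ with $1-x\leq e^{-x}$. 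Your plan is correct and, if anything, makes explicit a detail the paper leaves implicit (the strongly convex strengthening of \eqref{eq:smooth_lower_bound} needed to generate the $\lVert\vartheta_k-\theta^*\rVert^2$ component of $V_k$), so no substantive gap separates the two arguments.
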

\begin{proof}
    As $L(\theta)$ is strongly convex in $\theta$, we utilize inequality \eqref{eq:strgly-convexity} rather than \eqref{eq:convexity} which leads us to the following:
    \begin{equation}\label{eq:mainresult_2}
        \begin{split}
            &\nabla L_k(\theta_{k+1})^T(\bar{\theta}_k-\vartheta_{k})-\frac{\bar{L}_k}{2}(1-\beta)^2\lVert\bar{\theta}_k-\vartheta_k\rVert^2 \\
            &-\frac{\bar{L}_k\gamma^2\beta^2}{2\N_k^2}\lVert\nabla L_k(\theta_k)\rVert^2+\frac{\mu}{2}\beta^2\lVert\bar{\theta}_k-\vartheta_k\rVert^2\\
            &+\frac{\mu}{2}\lVert \theta_k-\vartheta_k \rVert^2\leq \nabla L_k(\theta_{k})^T(\bar{\theta}_k-\vartheta_{k})
        \end{split}
    \end{equation}
    Using Algorithm \ref{alg:HOT_2}, \eqref{eq:smooth_lower_bound}, \eqref{eq:mainresult_2}, setting $\gamma\leq\frac{\beta(2-\beta)}{16+\beta+\mu}$ and defining $\Delta V_k:=V_{k+1}-V_k$ it can be shown that 
    \begin{equation}\label{eq:key_step2}
        \Delta V_k \leq \frac{1}{\N_k}\{-\left(L_k(\theta_{k+1})-L_k(\theta^*)\right)-\frac{\gamma\beta\mu}{4}V_k\}\leq 0.
    \end{equation}
    This establishes that V is a Lyapunov function.
    
    Collecting $\Delta V_k$ terms from $t_0$ to $T$, and letting $T\rightarrow\infty$, it can be seen that $L_k(\theta_{k+1})-L_k(\theta^*)\in\ell_1\cap\ell_\infty$ and therefore $\lim_{k\rightarrow\infty}L_k(\theta_{k+1})-L_k(\theta^*)=0$. 
    Furthermore, from the bound on $\Delta V_k$, $V_{k+1}\leq\left(1-\mu\frac{\gamma\beta}{4\N_k}\right)V_k$. Finally, collecting terms, and for constant regressors, $\N_k=\N$:
    \begin{equation}\label{eq:exponential}
        V_{k}\leq\left(1-\mu\frac{\gamma\beta}{4\N}\right)^kV_0\leq \exp\left(-\mu Ck\right)V_0.
    \end{equation}
\end{proof}
\begin{remark}
Very similar to the proof of Theorem \ref{theorem:HOT_smooth}, here too, we employed properties of convex functions. The main distinction between the two theorems is the strong convexity of $L$. This property allows us to obtain a bound for $\Delta V_k$ as in \eqref{eq:key_step2}, and therefore \eqref{eq:exponential}.
\end{remark}

\subsubsection{Accelerated learning for constant regressors}
Since $\phi_k\equiv \phi$, it can be shown that in the quadratic minimization problem, the underlying gradient of $f$ in \eqref{eq:augmented_function} is linear in $\theta$ and therefore satisfies the property
\begin{equation}\label{eq:superposition}
    a\nabla f\left(\theta_k\right)+b\nabla f \left(\theta_{k-1}\right)=\nabla f\left(a\theta_k+b\theta_{k-1}\right)
\end{equation}
for any constants $a$ and $b$. Together with the hyperparameters $\gamma$ and $\beta$ chosen as $\bar{\beta}=1-\beta$ and $\bar{\alpha}=\gamma\beta$, the property \eqref{eq:superposition} allows Algorithm \eqref{alg:HOT_1} along with \eqref{eq:augmented_function} to be reduced to a form
\begin{equation} \label{eq:Nesterov_constant}
    \begin{split}
        \theta_{k+1}&=\nu_k-\bar{\alpha}\nabla f(\nu_k),\\
        \nu_{k+1}&=(1+\bar{\beta})\theta_{k+1}-\bar{\beta}\theta_k.
    \end{split}
\end{equation}
Equation \eqref{eq:Nesterov_constant} coincides with Nesterov's algorithm for strongly convex functions \cite[Equation~2.2.22]{Nesterov2018LecturesOptimization}. This in turn allows us to derive the following accelerated convergence property for the case when $L$ is quadratic in $\theta$:
\begin{theorem}[Modified from {\cite[Theorem~3.18]{Bubeck2015ConvexComplexity}}]\label{theorem:Nesterov_constant}
    For a $\bar{L}$-smooth and $\mu$-strongly convex function $f$, the iterates $\{\theta_k\}^\infty_{k=0}$ generated by \eqref{eq:Nesterov_constant} with $\theta_0=\nu_0$, $\bar{\alpha}=\frac{1}{\bar{L}}$, $\kappa = \bar{L}/\mu$, and $\bar{\beta}=(\sqrt{\kappa}-1)/(\sqrt{\kappa}+1)$, satisfy $f(\theta_k)-f(\theta^*)\leq \frac{\bar{L}+\mu}{2}\lVert\theta_0-\theta^*\rVert^2\exp\left(-\frac{k}{\sqrt{\kappa}}\right)$, and therefore if $k\geq\left\lceil\sqrt{\kappa}\log\left(\frac{(\bar{L}+\mu)\lVert\theta_0-\theta^*\rVert^2}{2\epsilon}\right)\right\rceil$ then $f(\theta_k)-f(\theta^*)\leq\epsilon$.
\end{theorem}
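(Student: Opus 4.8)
The plan is to recognize \eqref{eq:Nesterov_constant} as Nesterov's constant-momentum accelerated gradient method for a $\bar{L}$-smooth, $\mu$-strongly convex function and to reproduce the estimate-sequence argument of \cite[Theorem~3.18]{Bubeck2015ConvexComplexity}, adapted to the present notation in which $\nu_k$ is the extrapolation point at which the gradient is evaluated and $\theta_k=\nu_{k-1}-\bar\alpha\nabla f(\nu_{k-1})$ is the gradient-step iterate. Because the final assertion (the iteration-complexity bound) follows from the exponential rate by elementary algebra, the substance of the proof is the exponential decay estimate itself.

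First I would construct an estimate sequence $(\Phi_k,\lambda_k)$: a family of quadratics together with scalars $\lambda_k\in[0,1]$, $\lambda_k\to0$, satisfying $\Phi_k(x)\le(1-\lambda_k)f(x)+\lambda_k\Phi_0(x)$ for all $x$. I would initialize with $\Phi_0(x)=f(\theta_0)+\tfrac{\mu}{2}\lVert x-\theta_0\rVert^2$ and use the recursion $\Phi_{k+1}(x)=(1-\lambda)\Phi_k(x)+\lambda\big[f(\nu_k)+\nabla f(\nu_k)^T(x-\nu_k)+\tfrac{\mu}{2}\lVert x-\nu_k\rVert^2\big]$ with constant contraction $\lambda=1/\sqrt{\kappa}$. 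Strong convexity (Definition~\ref{def:strgly-convexity}) guarantees that the bracketed term lower-bounds $f$, so the estimate-sequence inequality propagates by induction and gives $\lambda_k=(1-1/\sqrt{\kappa})^k$. Since each $\Phi_k$ is a quadratic with fixed curvature $\mu$, I can write $\Phi_k(x)=\Phi_k^\ast+\tfrac{\mu}{2}\lVert x-v_k\rVert^2$ and read off explicit recursions for the center $v_k$ and the minimum value $\Phi_k^\ast$.

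The key step, and the main obstacle, is to verify that the gradient-step iterates maintain the invariant $f(\theta_k)\le\Phi_k^\ast$. Assuming it at step $k$, I would upper-bound $f(\theta_{k+1})$ via $\bar{L}$-smoothness (Definition~\ref{def:smoothness}) at $\nu_k$; since $\bar\alpha=1/\bar{L}$ this yields $f(\theta_{k+1})\le f(\nu_k)-\tfrac{1}{2\bar{L}}\lVert\nabla f(\nu_k)\rVert^2$. I would then lower-bound $\Phi_{k+1}^\ast$ using the recursions for $v_k$ and $\Phi_k^\ast$ together with the inductive hypothesis. Matching the two sides forces a relation among $\lambda$, the extrapolation coefficient $\bar\beta$, and the center $v_{k+1}$; confirming that the prescribed values $\lambda=1/\sqrt{\kappa}$ and $\bar\beta=(\sqrt{\kappa}-1)/(\sqrt{\kappa}+1)$ make $v_{k+1}$ coincide with the momentum point produced by \eqref{eq:Nesterov_constant}, so that the residual cross terms cancel, is the delicate algebra on which the whole argument rests and where the particular value of $\bar\beta$ is pinned down.

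Once the invariant holds, the standard estimate-sequence lemma gives $f(\theta_k)-f(\theta^\ast)\le\lambda_k\big(\Phi_0(\theta^\ast)-f(\theta^\ast)\big)$. Applying $\bar{L}$-smoothness to bound $f(\theta_0)-f(\theta^\ast)\le\tfrac{\bar{L}}{2}\lVert\theta_0-\theta^\ast\rVert^2$ yields $\Phi_0(\theta^\ast)-f(\theta^\ast)\le\tfrac{\bar{L}+\mu}{2}\lVert\theta_0-\theta^\ast\rVert^2$, which is precisely the source of the constant $\tfrac{\bar{L}+\mu}{2}$ in the statement. Combining this with $\lambda_k=(1-1/\sqrt{\kappa})^k\le\exp(-k/\sqrt{\kappa})$ delivers the claimed rate. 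Finally, for the complexity bound I would require the right-hand side to be at most $\epsilon$, take logarithms, solve $k\ge\sqrt{\kappa}\log\!\big((\bar{L}+\mu)\lVert\theta_0-\theta^\ast\rVert^2/(2\epsilon)\big)$, and take the ceiling — a routine computation requiring no new ideas.
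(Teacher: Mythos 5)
Your proposal is correct, and it is essentially the paper's own proof: the paper simply defers to \cite[Pages~290--293]{Bubeck2015ConvexComplexity}, and the argument given there is exactly the estimate-sequence construction you outline (the quadratics $\Phi_k$ with contraction $1/\sqrt{\kappa}$, the invariant $f(\theta_k)\leq\Phi_k^\ast$ maintained via the $1/\bar{L}$ gradient step, and the initialization bound producing the constant $\tfrac{\bar{L}+\mu}{2}$). No gap; you have just written out the cited proof rather than citing it.
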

\begin{proof}
    Refer to \cite[Page~290-293]{Bubeck2015ConvexComplexity}.
\end{proof}

For the general case of a strongly convex function $L$, the superposition property \eqref{eq:superposition} is no longer valid. Therefore Nesterov's method of \textit{estimating sequences} is no longer adequate. The convexity property in Definition \ref{def:a2def} can be suitably leveraged to lead to the following inequality \cite{karimi2016unified}:
\begin{equation}\label{eq:convex}
    \mu \lVert x-y\rVert\leq\lVert\nabla f(x)-\nabla f(y)\rVert\leq\bar{L} \lVert x-y\rVert,
\end{equation}
Based on \eqref{eq:convex}, we  propose the following conjecture on an accelerated convergence property of Algorithm \ref{alg:HOT_2}.
\begin{conjecture}\label{conjecture:HT_fast}
For a contnuously differentiable smooth and strongly convex function $\Bar{L}$, and for constant regressors, the iterates $\{\theta_k\}_{k=0}^\infty$ generated by Algorithm \ref{alg:HOT_2} satisfy a convergence rate of $\mathcal{O}(\log(1/\epsilon))$.
\end{conjecture}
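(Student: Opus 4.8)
The plan is to establish accelerated linear convergence by constructing a discrete Lyapunov (potential) function that contracts by a factor $1-\Theta(1/\sqrt{\kappa})$ at each iteration, where $\kappa=\bar{L}/\mu$. Because the superposition property \eqref{eq:superposition} fails for a general strongly convex $L$, the exact reduction of Algorithm \ref{alg:HOT_2} to Nesterov's iteration \eqref{eq:Nesterov_constant} is unavailable and Theorem \ref{theorem:Nesterov_constant} cannot be invoked directly. Instead I would mimic the Lyapunov-function analysis of accelerated gradient methods, replacing every exact gradient identity used in the quadratic reduction by the two-sided gradient inequality \eqref{eq:convex} together with strong convexity \eqref{eq:strgly-convexity} and smoothness \eqref{eq:smoothness}.

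First I would fix the constant regressor so that $\N_k\equiv\N$ and treat $\bar{f}:=L/\N$ as the effective objective; it is $(\mu/\N)$-strongly convex and $(\bar{L}/\N)$-smooth and hence shares the condition number $\kappa=\bar{L}/\mu$. Writing Algorithm \ref{alg:HOT_2} as the coupled system $\bar{\theta}_k=\theta_k-\gamma\beta\nabla\bar{f}(\theta_k)$, $\theta_{k+1}=(1-\beta)\bar{\theta}_k+\beta\vartheta_k$, $\vartheta_{k+1}=\vartheta_k-\gamma\nabla\bar{f}(\theta_{k+1})$, I would identify $\vartheta_k$ as the momentum (slow) variable and $\theta_k$ as the averaged (fast) variable, in analogy with the two-sequence potentials used for accelerated methods. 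The hyperparameters would be chosen in the accelerated regime dictated by the reduction \eqref{eq:Nesterov_constant}, namely $\beta=\Theta(1/\sqrt{\kappa})$ (so that $\bar{\beta}=1-\beta\approx(\sqrt{\kappa}-1)/(\sqrt{\kappa}+1)$) with $\gamma$ taken so that the effective step $\gamma\beta/\N$ matches the Nesterov step of order $1/\bar{L}$; this is precisely the scaling under which Theorem \ref{theorem:Nesterov_constant} attains the $\sqrt{\kappa}$ rate.

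The candidate Lyapunov function would be the mixed form $E_k=\big(\bar{f}(\theta_k)-\bar{f}(\theta^*)\big)+\tfrac{\mu}{2\N}\lVert\vartheta_k-\theta^*\rVert^2$, patterned after the potential used for Nesterov's strongly convex method. The core computation is to expand $E_{k+1}-(1-c/\sqrt{\kappa})E_k$ along the trajectory: the descent contribution is controlled by the smoothness bound \eqref{eq:smoothness} applied between $\theta_k$ and $\bar{\theta}_k$ (yielding the usual $-\tfrac{1}{2\bar{L}}\lVert\nabla L\rVert^2$ progress through \eqref{eq:smooth_lower_bound}), while the distance term is contracted using strong convexity \eqref{eq:strgly-convexity} to extract the $\tfrac{\mu}{2}\lVert\cdot\rVert^2$ factor. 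The cross terms coupling $\nabla\bar{f}(\theta_{k+1})$ with $\vartheta_k-\theta^*$ and with $\theta_k-\theta^*$ would be bounded through the Lipschitz-gradient inequality in \eqref{eq:convex} together with Young splittings whose parameters are tuned at order $\sqrt{\kappa}$, so that the surviving negative terms dominate. With $c$ small and the splitting constants chosen appropriately, this should give $E_{k+1}\leq(1-c/\sqrt{\kappa})E_k$, whence $E_k\leq(1-c/\sqrt{\kappa})^kE_0$ and, since $\bar{f}(\theta_k)-\bar{f}(\theta^*)\leq E_k$, the accelerated $\mathcal{O}(\sqrt{\kappa}\log(1/\epsilon))$ iteration complexity claimed (up to the stated $\mathcal{O}(\log(1/\epsilon))$).

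I expect the main obstacle to be exactly where the quadratic proof used superposition: in \eqref{eq:Nesterov_constant} the identity \eqref{eq:superposition} let the momentum and gradient steps combine exactly, so the cross terms telescoped. For a general strongly convex $L$ these cross terms survive, and the difficulty is to absorb quantities of the form $\nabla\bar{f}(\theta_{k+1})^T(\vartheta_k-\theta^*)$ using only the two-sided inequalities \eqref{eq:convex} while still producing a contraction factor $1-\Theta(1/\sqrt{\kappa})$ rather than the weaker $1-\Theta(1/\kappa)$ already delivered by Theorem \ref{theorem:HOT_strongly}. Extracting the $\sqrt{\kappa}$ (rather than $\kappa$) dependence from inequalities rather than equalities is the delicate quantitative step; it is plausible that a different auxiliary variable, or a time-varying potential $E_k=A_k\big(\bar{f}(\theta_k)-\bar{f}(\theta^*)\big)+B_k\lVert\vartheta_k-\theta^*\rVert^2$ with geometrically growing weights $A_k,B_k$, is needed to close the argument, which is likely why the authors state this only as a conjecture.
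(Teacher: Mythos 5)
You should first be aware that the paper itself does not prove this statement: it is stated as a conjecture, and the only support the authors offer is numerical (the simulation in Section~\ref{section:sims}, where Algorithm~\ref{alg:HOT_2} applied to \eqref{eq:simulation_loss_mu} tracks Nesterov's method applied to the normalized loss). So there is no paper proof to compare against, and the relevant question is whether your proposal actually closes the open problem. It does not. Your plan --- a two-sequence potential $E_k=\bigl(\bar{f}(\theta_k)-\bar{f}(\theta^*)\bigr)+\tfrac{\mu}{2\N}\lVert\vartheta_k-\theta^*\rVert^2$, hyperparameters scaled so that $\beta=\Theta(1/\sqrt{\kappa})$, smoothness for descent, strong convexity for contraction, and Young splittings for the cross terms --- is exactly the standard template for accelerated methods, and it is a sensible program. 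But the decisive step, $E_{k+1}\leq(1-c/\sqrt{\kappa})E_k$, is only asserted (``this should give''), and you yourself concede in the final paragraph that absorbing the cross term $\nabla\bar{f}(\theta_{k+1})^T(\vartheta_k-\theta^*)$ using only the two-sided inequality \eqref{eq:convex} may fail to yield $1-\Theta(1/\sqrt{\kappa})$ and may require a different, as-yet-unspecified potential. That concession is the gap: the entire difficulty of the conjecture lives in that one inequality, and a proposal that leaves it open is a well-informed restatement of why the problem is hard, not a proof.

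Two further points worth internalizing. First, generic Young splittings tuned ``at order $\sqrt{\kappa}$'' are known to be too lossy here: naive absorption of the momentum cross terms typically degrades the contraction to $1-\Theta(1/\kappa)$, which is exactly the rate Theorem~\ref{theorem:HOT_strongly} already provides via \eqref{eq:key_step2} and \eqref{eq:exponential}; acceleration proofs avoid this by exploiting the precise algebraic coupling between the interpolation step and the gradient step, which for Algorithm~\ref{alg:HOT_2} is obstructed because the gradient in the $\vartheta$-update is evaluated at $\theta_{k+1}$, a point that is not the Nesterov extrapolation point once \eqref{eq:superposition} fails. Second, note that the conjecture as literally written, $\mathcal{O}(\log(1/\epsilon))$ with the $\kappa$-dependence of the constant suppressed, already follows from Theorem~\ref{theorem:HOT_strongly}: for constant regressors $V_k\leq\exp(-\mu C k)V_0$ gives linear convergence, hence logarithmic iteration complexity in $\epsilon$. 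The genuine open content --- which your proposal correctly targets, and which the paper's numerics suggest --- is the accelerated $\sqrt{\kappa}$ dependence, and that is precisely what remains unestablished in both the paper and your attempt.
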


%The following section provides several numerical simulations that support the above conjecture as well as our main results of the paper in Theorems \ref{theorem:HOT_smooth} and \ref{theorem:HOT_strongly}.
In summary, the main results of the paper can be found in Sections \ref{section:linear} and \ref{section:SmoothCase} in the form of Theorems \ref{theorem:HT_continuous}, \ref{theorem:HOT_smooth}, and \ref{theorem:HOT_strongly}. Theorems \ref{theorem:HT_continuous} and \ref{theorem:HOT_smooth} demonstrated the stability property of HT when the underlying loss function was smooth and convex, while Theorem \ref{theorem:HOT_strongly} presented the stability property of HT when the loss function is smooth and strongly convex.

\section{Numerical Simulations}\label{section:sims}
In this section, we numerically validate the results of Theorems \ref{theorem:HOT_smooth} and \ref{theorem:HOT_strongly}, which correspond to convex and strongly convex functions in discrete-time. We also validate Conjecture \ref{conjecture:HT_fast} in Section \ref{section:SmoothCase}, which pertains to an accelerated convergence property of the HT. All simulations have been implemented in Python and the code is available in the online notebooks  \cite{link1} and \cite{link2}.

The starting point for our numerical experiments is a smooth convex function defined as
\begin{equation}\label{eq:simulation_loss}
    L_k(\theta)=\log{(a_ke^{b_k\theta}+a_ke^{-b_k\theta})},
\end{equation}
where $a_k$ and $b_k$ are positive scalars and may be time-varying. This function has a unique minimum at $\theta^*=0$. The  gradient for this loss function can be computed as
%\begin{equation}\label{eq:nonlineargradient}
%\end{equation}
$    \nabla L_k(\theta)=(b_k(e^{2b_k\theta}-1))/(e^{2b_k\theta}+1).$
It is also easy to see that the Hessian is upper-bounded as $ \nabla^2 L_k(\theta)\leq b_k^2.$%
%\begin{equation}\label{eq:simulation_loss_hessian}
   %=\frac{4b^2e^{2b\theta}}{(e^{2b\theta}+1)^2}
  
%\end{equation}

%We carry out two numerical experiments using the function in \eqref{eq:simulation_loss}. The first experiment evaluates the stability of the HT in Algorithm \ref{alg:HOT_2} and compares its performance with both a gradient descent and Nesterov's algorithm in the presence of a time-varying $b_k$. In the second experiment, we modify the loss function in \eqref{eq:simulation_loss} through an additive regularization term so as to make it strongly convex. The modified loss function is utilized to illustrate the property of HT summarized in Conjecture \ref{conjecture:HT_fast}. Comparisons with Nesterov's algorithm are provided in this case.
\subsection{Stability for time-varying regressors}
We represent the time-varying regressors in two ways: by a step change in $b_k$ from $7$ to $14$ at a particular iteration in Figure \ref{fig:simulation_1} and by a sinusoidal change in $b_k=14+7\sin(200k)$ in Figure \ref{fig:simulation_1bis}. The normalizing signal $H_k$ is chosen as $b_k^2$, %which corresponds to the supremum of the Hessian of \eqref{eq:simulation_loss_hessian}, 
a conservative choice, as opposed to \eqref{eq:H_k_definition}.

In Figures \ref{fig:1a} and \ref{fig:1c} the hyperparameter for both smooth methods is chosen as $\bar{\alpha}=1/\Bar{L}_0$. For the High-order Tuner, we choose $\beta=0.1$ and $\gamma=1/\beta$, so that the effective step sizes are comparable for both methods. These experiments show the stable behavior of HT when the regressors change and instability for the other two methods. In Figure \ref{fig:1b} and \ref{fig:1d} the hyperparameters of the HT are chosen according to Theorem \ref{theorem:HOT_smooth}, i.e.: $\beta=0.1$, $\gamma = \frac{\beta(2-\beta)}{8+\beta}$; and the step size $\Bar{\alpha}$ is chosen as $\Bar{\alpha}=\gamma\beta/\N_0$. Because of the reduction of the effective step size in this experiment, we run more iterations. In this case, all methods remain stable. Nevertheless, since the HT updates its effective step size when the regressors change, it automatically improves its learning rate compared to the other methods.

\begin{figure}
\subfigure[]{\includegraphics[width=0.45\textwidth]{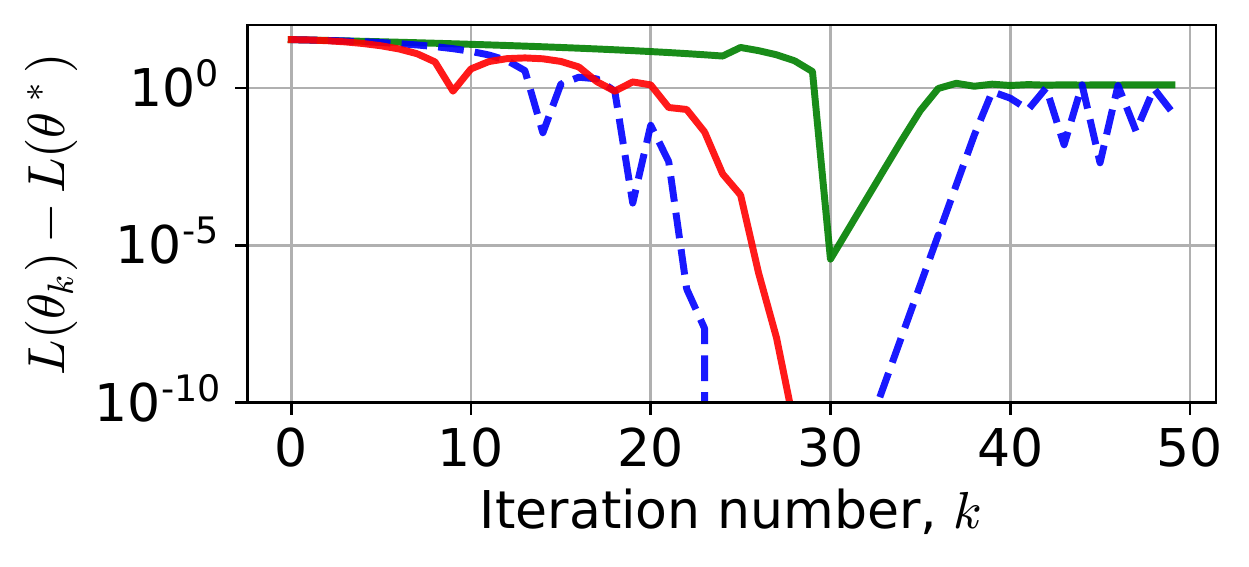}
\label{fig:1a}}
\hfill
\subfigure[]{\includegraphics[width=0.45\textwidth]{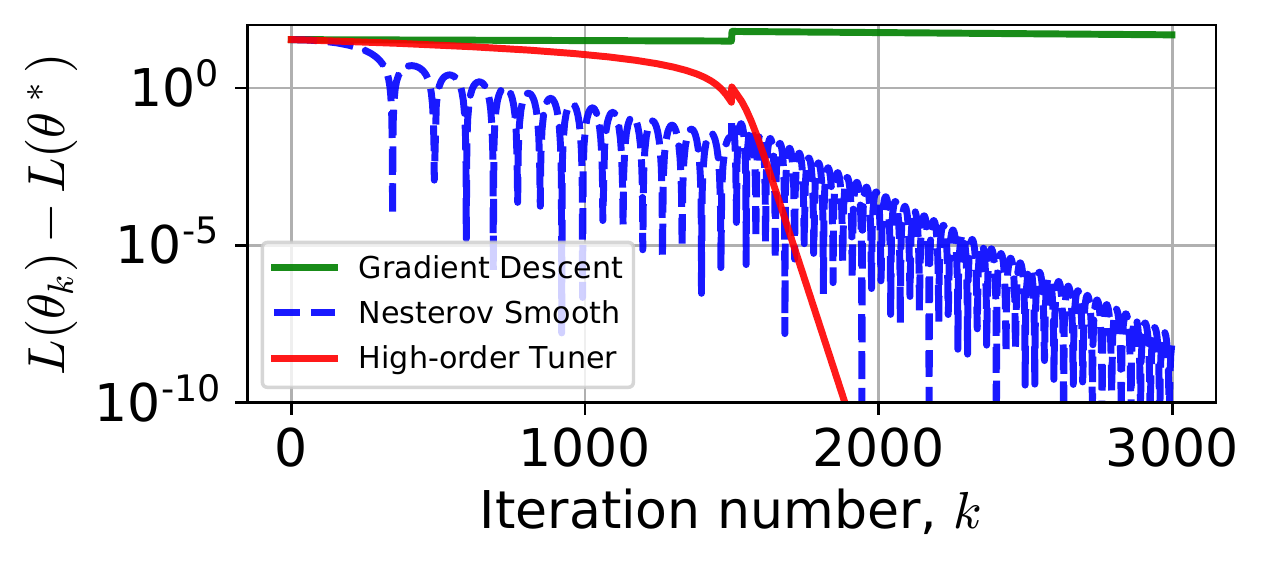}
\label{fig:1b}}
\caption{Stability for time-varying regressors. (a) Step change in $b_k$ from 7 to 14 at $k=25$. (b) Step change in $b_k$ from 7 to 14 at $k=1500$. }
\label{fig:simulation_1}
\end{figure}

\begin{figure}[h]
\subfigure[]{\includegraphics[width=0.44\textwidth]{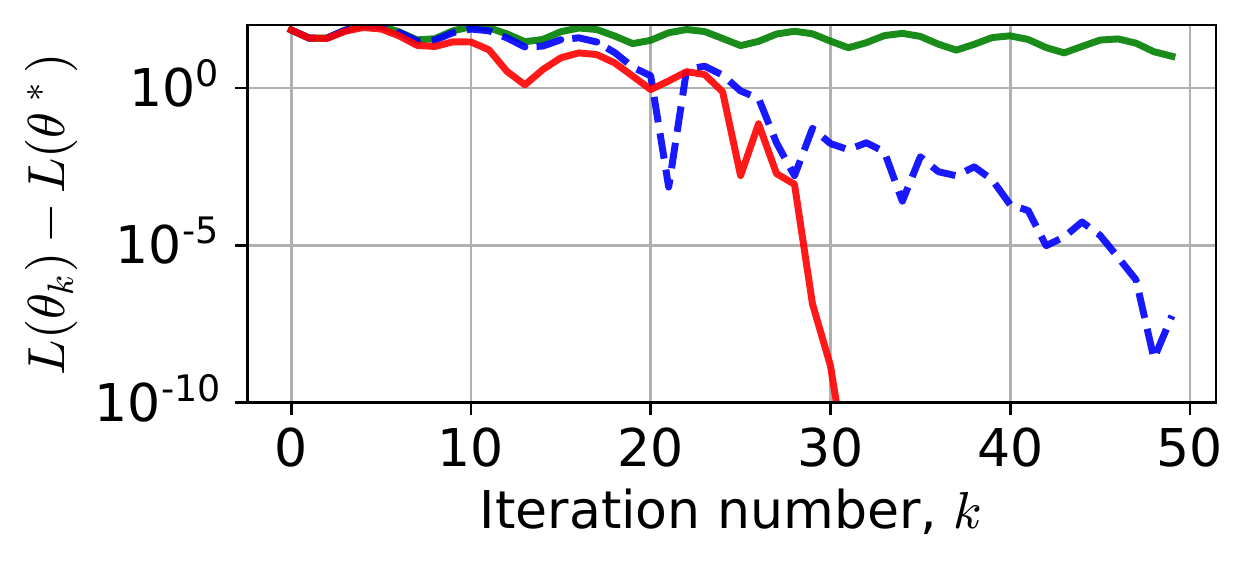}
\label{fig:1c}}
\hfill
\subfigure[]{\includegraphics[width=0.45\textwidth]{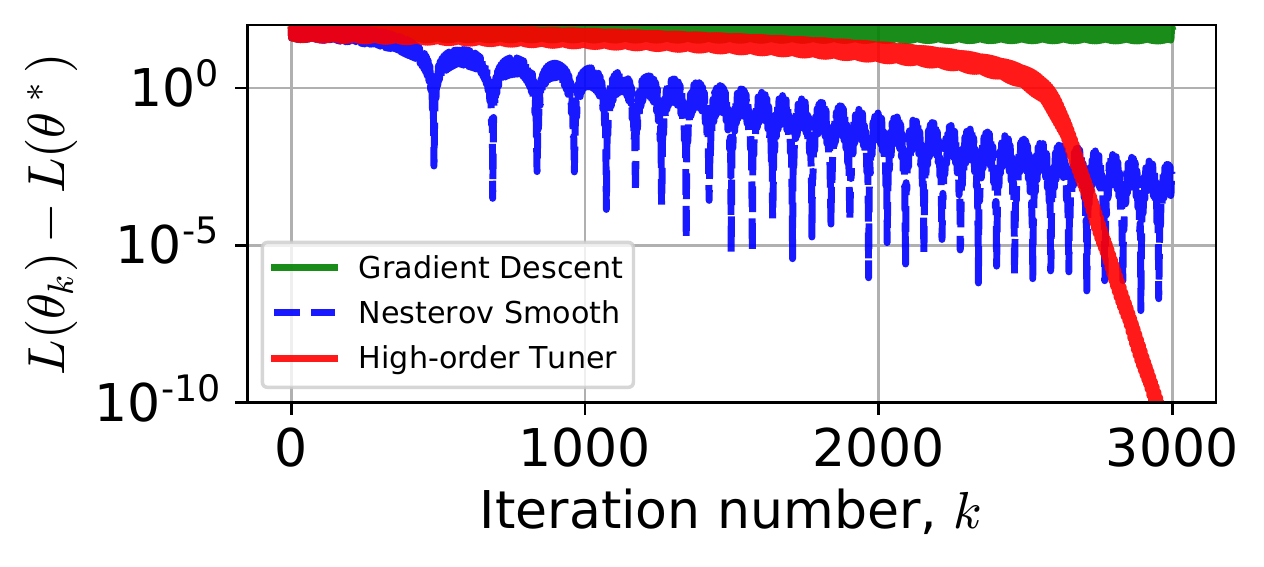}
\label{fig:1d}}
\caption{Stability for time-varying regressors with a sinusoidal change in $b_k$.}
\label{fig:simulation_1bis}
\end{figure}

\subsection{Convergence rate for smooth and strongly convex functions for constant regressors}
First, in order to make \eqref{eq:simulation_loss} strongly convex, we include a regularizing term to the loss function, producing a new smooth and $\mu$-strongly convex function $L_\mu$  as in
\begin{equation}\label{eq:simulation_loss_mu}
    L_\mu(\theta)=\log{(a_ke^{b_k\theta}+a_ke^{-b_k\theta})}+\frac{\mu}{2}\lVert\theta-\theta_0\rVert^2.
\end{equation}

It is clear from the structure of \eqref{eq:simulation_loss_mu} that the underlying problem is nonlinearly parameterized, even while the function remains strongly convex and smooth. Because of the nonlinearity of the gradient of $L$, Algorithm \ref{alg:HOT_2} cannot be reduced to Nesterov's method \eqref{eq:Nesterov_constant}. Thus, the accelerated convergence rate for the loss function in Theorem \ref{theorem:Nesterov_constant} cannot be directly extended to this case. 
\begin{figure}[h]
    \centering
    \includegraphics[width=0.45\textwidth]{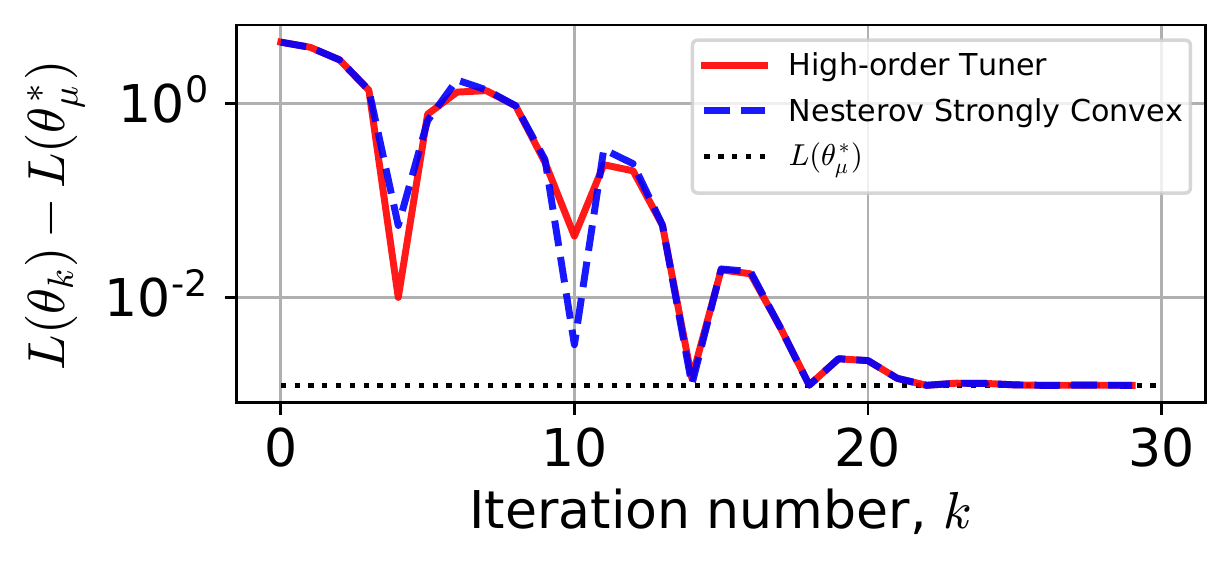}
    \caption{Accelerated learning properties of the HT. Original smooth loss function gap values at each iteration, and loss function value at $\theta_\mu^*$.}
    \label{fig:simulation}
\end{figure}
However, as shown in Figure \ref{fig:simulation}, our numerical simulation studies show that when Algorithm \ref{alg:HOT_2} is applied to the convex loss function \eqref{eq:simulation_loss_mu}, it results in the same accelerated convergence rate as  Nesterov's algorithm applied to $\Bar{f}_\mu(\theta)=L_\mu(\theta)/\N$. Figure \ref{fig:simulation} shows the results when \eqref{eq:Nesterov_constant} and Algorithm \ref{alg:HOT_2} minimize function $\bar{f}_\mu$  and \eqref{eq:simulation_loss_mu} respectively, with $a_k\equiv\frac{1}{2}$, $b_k\equiv1$, $\mu=10^{-4}$ and $\theta_0=5$. The hyperparameters for this simulation have been chosen as in Theorem \ref{theorem:Nesterov_constant} and  $\beta = 1-\bar{\beta}$ and $\gamma=\bar{\alpha}/\beta$.  It can be seen that both algorithms result in an equally fast convergence supporting Conjecture \ref{conjecture:HT_fast}.

\section{Summary}
In this paper we have shown that the discrete HT proposed in \cite{Gaudio2020AcceleratedRegressors} is also stable for general convex loss functions. An added advantage of this HT is an accelerated convergence of the loss function to zero, as shown in the numerical simulations. We anticipate that the tools used to prove these results, as well as the insights gained from this framework, will be useful for further extensions to problems with nonlinear regression and nonconvexities.

%%%%%%%%%%%%%%%%%%%%%%%%%%%%%%%%%%%%%%%%%%%%%%%%%%%%%%%%%%%%%%%%%%%%%%%%%%%%%%%%
%\section{ACKNOWLEDGMENTS}

%%%%%%%%%%%%%%%%%%%%%%%%%%%%%%%%%%%%%%%%%%%%%%%%%%%%%%%%%%%%%%%%%%%%%%%%%%%%%%%%
\bibliographystyle{IEEEtran}
\bibliography{moreu_cdc21}

\end{document}